\newcommand{\Rmnum}[1]{\expandafter\@slowromancap\romannumeral #1@}
\begin{document}
\newcommand{\tabincell}[2]{\begin{tabular}{@{}#1@{}}#2\end{tabular}}

\title{Solving PDEs with  Unmeasurable Source Terms Using Coupled Physics-Informed Neural Network with Recurrent Prediction for Soft Sensors}

\author{Aina Wang, 
        Pan Qin, 
        Xi-Ming Sun,~\IEEEmembership{Senior Member,~IEEE}, 

\thanks{The authors are with the Key Laboratory of Intelligent Control and Optimization for Industrial Equipment of Ministry of Education and the School of Control Science and Engineering, Dalian University of Technology, Dalian  116024, China  e-mail: WangAn@mail.dlut.edu.cn, qp112cn@dlut.edu.cn, sunxm@dlut.edu.cn $\left(\textit{Corresponding author: Pan Qin}\right)$}}

\markboth{}
{Shell \MakeLowercase{\textit{et al.}}: Bare Demo of IEEEtran.cls for IEEE Journals}

\maketitle

\begin{abstract}
Partial differential equations (PDEs) are a  model  candidate for soft sensors  in industrial processes with  spatiotemporal dependence. Although physics-informed neural networks (PINNs) are a promising   machine learning method for solving PDEs, they are  infeasible for the 
nonhomogeneous PDEs with unmeasurable source terms.
To this end,  a coupled PINN (CPINN) with a recurrent prediction (RP) learning strategy (CPINN-RP) is proposed. 
First,  CPINN composed of $NetU$ and $NetG$ is proposed.
$NetU$ is for approximating   PDEs solutions  and
$NetG$ is for  regularizing the training of  $NetU$.
The two networks are integrated into a data-physics-hybrid loss function.
Then, we theoretically prove that the proposed CPINN has a satisfying approximation 
capability  for solutions to nonhomogeneous PDEs  with unmeasurable source terms.
Besides the theoretical aspects, we propose a hierarchical training strategy to optimize and couple $NetU$ and $NetG$.
Secondly, $NetU$-$RP$ is proposed  for compensating  information loss in data sampling to improve the prediction performance, in which RP is the recurrently delayed outputs of well-trained CPINN and hard sensors.
Finally, the artificial and practical datasets are used to verify the feasibility and effectiveness of CPINN-RP for  soft sensors.
\end{abstract}

\begin{IEEEkeywords}
Coupled physics-informed neural network,
hierarchical training strategy, 
partial differential equations with unmeasurable source terms,  recurrent prediction, soft sensors.

\end{IEEEkeywords}
\IEEEpeerreviewmaketitle

\section{Introduction}\label{se1}

\IEEEPARstart
{I}{ndustrial} processes have numerous  variables   measured with hard sensors, such as temperature and displacement sensors for rotating machinery.
However, the limitations of  operating environment challenge the equipment of hard sensors \cite{lui2022supervised}.
To tackle the problems, soft sensors are widely used to estimate the key variables using
easy-to-measure  variables and mathematical models \cite{9929274},\cite{ 9673119}. Widely used models and algorithms in soft sensors are mainly for time-series data, such as  the Kalman filter\cite{welch1995introduction} and observer-based methods\cite{fischer2022fault}.
However, as far as spatiotemporal  variables  are involved,  these methods cannot handle the spatial information. 
Partial differential equations (PDEs), such as the parabolic-type-PDE  heat equation and the hyperbolic-type-PDE wave equation \cite{tang2023adaptive, 8948299},  are one of the  mathematical models for describing  spatiotemporal dependence in engineering~\cite{9868162},
physics\cite{1514663}, medicine\cite{oszkinat2022uncertainty},
finance\cite{sirignano2018dgm},
and weather forecasting\cite{kashinath2021physics}.
Thus,  PDEs are a  model candidate for soft sensors in industrial processes with spatiotemporal variables \cite{9062588}.
Accordingly, the methods for solving PDEs play a vital role in soft sensors.

 In some cases, analytical methods for solving PDEs may be difficult to achieve caused of the  spatiotemporal coupling nature and unknown dynamics \cite{8511068}.
Numerical approaches, such as the finite difference method (FDM) \cite{smith1985numerical} and the finite element method (FEM)~\cite{li2017numerical,dziuk2013finite}, have been widely used for solving PDEs.
FDM uses a topologically square lines network to construct a discretization scheme for solving PDEs.
However, complex geometries in multiple dimensions challenge FDM~\cite{peiro2005finite}.
On the other hand, complex geometries can be treated with FEM~\cite{reddy2019introduction}.
The  difficulty of classical numerical approaches is  the tradeoff between the accuracy and efficiency caused by forming meshes.

Among numerical methods, the famous Galerkin method uses the linear combination of basis functions to approximate the PDEs solutions \cite{zhuang2010aspects}.  
 As a kind of variant of the  Galerkin method, several works  replaced the linear combination of basis functions with machine learning models  to construct data-efficient methods for solving PDEs~\cite{sirignano2018dgm}.  Successful applications of deep learning methods in various fields, such as image processing \cite{ye2021deep},
text mining~\cite{9260228}, and speech recognition~\cite{9927496}, ensure that they are excellent replacers of  the linear combination of basis functions in the  Galerkin method.
Consequently, using the well-known approximation capability of neural networks to solve PDEs is a natural idea and has been investigated  previously~\cite{meade1994numerical, lagaris1998artificial, lagaris2000neural}.
The  physics-informed neural networks (PINNs) were introduced by Raissi et al.\cite{raissi2019physics} to solve  PDEs with given initial and boundary conditions and priori physical knowledge\cite{9786547, cuomo2022scientific,zobeiry2021physics, chen2021physics}.
Note that solving PDEs by machine learning methods is usually meshfree.

PDEs can be classified into 
homogeneous and nonhomogeneous types.
The homogeneous PDEs can describe the industrial processes without  source terms.
The nonhomogeneous PDEs can reveal the continuous energy propagation behaviors of  sources describing the industrial processes  driven by sources.
The abovementioned methods  have achieved successful applications for solving homogeneous PDEs. Other recent developments for solving nonhomogeneous PDEs have also been investigated.
The functional forms of the solution and the source term were both assumed to be unknown in~\cite{yang2022multi},  in which the measurements of the source term can be obtained separately from the measurements of the solution.
The recent work\cite{gao2022physics} 
proposed a graph neural network  to solve the PDEs with unmeasurable source terms, 
where the source terms were assumed to be constant. 
Although the aforementioned methods have made great progress in solving nonhomogeneous PDEs, independent measurements of the source terms in  the operating domain cannot always be easily obtained from practical situations, such as the heat sources of  engines\cite{guzzella2009introduction}.
Furthermore, the existing methods with the assumption of the constant source terms cannot be extended to investigate the industrial processes with spatiotemporal dependence.
Thus, solving PDEs following with unmeasurable  source terms is an under-investigated issue.

To solve PDEs with unmeasurable  source terms, this article proposes a coupled PINN (CPINN) with a recurrent prediction (RP) learning strategy  (CPINN-RP), which can be considered  a two-phase soft sensor.
In the first phase, CPINN composed of $NetU$ and $NetG$ is proposed. 
$NetU$ is for approximating  solutions to PDEs  and $NetG$ is for regularizing the training of $NetU$.
The two networks are integrated into a data-physics-hybrid loss function.
The approximation capability of CPINN is theoretically proved using the second power of $L^2$-norm in this article, under the assumption of  unmeasurable source terms.
Note that the recent work\cite{stenen2022combining} also theoretically claimed the approximation capability of neural networks. However, our theoretical results give the bound of the approximation error of CPINN, which has not been considered in \cite{stenen2022combining}. 
Besides the theoretical aspects, we also propose a hierarchical training strategy to optimize and couple $NetU$ and $NetG$  for practical situations.
If PDEs include the partial derivative of the temporal variables, the training dataset will be sampled with a fixed temporal interval. This discretization strategy can cause information loss.
To this end, $NetU$-$RP$ is proposed in the second phase, which is for compensating  information loss  to improve the prediction performance. RP is the recurrently delayed outputs of well-trained CPINN and hard sensors.
Finally,   the artificial and practical datasets are used to verify the feasibility and effectiveness of  CPINN-RP for  soft sensors in industrial processes with spatiotemporal dependence and unmeasurable  sources.

The rest of this article is organized as follows.
The classical PINNs are briefly reviewed in Section \uppercase\expandafter{\romannumeral2}.
CPINN-RP  is proposed in Section \uppercase\expandafter{\romannumeral3}.
In Section {\uppercase\expandafter{\romannumeral4}},  CPINN-RP is verified with  the artificial and practical datasets.
Finally, Section \uppercase\expandafter{\romannumeral5} concludes this article.
\section{Brief Review of PINNs}
We   briefly review the basic idea of PINNs, in which the following homogeneous PDEs are considered:\begin{equation}\label{eq:forward1}
u_t(\boldsymbol{x}, t)+\mathcal{N}[u(\boldsymbol{x}, t)]=0, \ \boldsymbol{x} \in \Omega \subseteq \mathbb{R}^{d}  ,\ t \in[0, T] \subset \mathbb{R}.
\end{equation}
Here, $\boldsymbol{x}$ is the spatial variable;
$t$ is the temporal variable;
$u: \mathbb{R}^{d}\times\mathbb{R}\rightarrow \mathbb{R}$ denotes the  solution;
$\mathcal{N}[\cdot ]$ is a series of  differential operators;
the compact domain $\Omega \subseteq \mathbb{R}^{d}$ is a spatial bounded open set with the boundary $\partial\Omega$.

PINNs  can be  trained by minimizing the loss function
\begin{equation} \label{eq:loss}
{{\rm {MSE}}_H} = {\rm {MSE}}_{DH}+ {\rm {MSE}}_{PH}.
\end{equation}
Here, ${\rm MSE}_{DH}$ is formulated as the following
\begin{equation}\label{eq:20221115-1}
{\rm MSE}_{DH}= \displaystyle \frac{1}{{\rm card}\left(D\right)}\underset{{(\boldsymbol{x}, t,u)\in D}}{\sum} \left( \hat u\left(\boldsymbol{x},t;\boldsymbol{\Theta}_{U}\right) -{u\left(\boldsymbol{x},t\right)}\right)^{2},
\end{equation}
where $D$ is the training dataset and  ${\rm card}\left(D\right)$ is the cardinality of $D$. 
$\hat u\left(\boldsymbol{x},t;\boldsymbol{\Theta}_{U}\right)$ is the function of PINNs  to approximate the solution satisfying  \eqref{eq:forward1}, with $\boldsymbol{\Theta}_{U}$ being a set of parameters.
This mean-squared-error  term
\eqref{eq:20221115-1} is considered a data-driven loss.

The left-hand side of \eqref{eq:forward1} is used to define a residual function as the following
\begin{equation}\label{eq:forward2}
f(\boldsymbol{x},t) = u_t(\boldsymbol{x}, t)+\mathcal{N}[u(\boldsymbol{x}, t)].
\end{equation}
Consequently, ${\rm {MSE}}_{PH}$ is formulated as 
\begin{equation}\label{eq:20221111-qp1}
{\rm MSE}_{PH}= \displaystyle \frac{1}{{\rm card}\left(E\right)} \underset{(\boldsymbol{x}, t)\in {E}}{\sum}\hat f\left(\boldsymbol{x}, t;\boldsymbol{\Theta}_{U}\right)^2,
\end{equation}
where $E$ denotes the set of collocation points, $\hat f\left(\boldsymbol{x}, t;\boldsymbol{\Theta}_{U}\right)$ is an estimation to the residual function $f(\boldsymbol{x},t)$ based on $\hat{u}\left(\boldsymbol{x},t;\boldsymbol{\Theta}_{U}\right) $.
$\hat f\left(\boldsymbol{x}, t;\boldsymbol{\Theta}_{U}\right)$ is obtained as the following
\begin{equation}\label{eq:f}
\hat{f}\left(\boldsymbol{x}, t;\boldsymbol{\Theta}_{U}\right) = \hat{u}_t\left(\boldsymbol{x},t;\boldsymbol{\Theta}_{U}\right) +\mathcal{N}\left[\hat{u}\left(\boldsymbol{x},t;\boldsymbol{\Theta}_{U}\right) \right],
\end{equation}
where $ \hat{u}_t\left(\boldsymbol{x},t;\boldsymbol{\Theta}_{U}\right)$ and $\mathcal{N}\left[\hat{u}\left(\boldsymbol{x},t;\boldsymbol{\Theta}_{U}\right) \right]$ are obtained  using automatic differentiation (AD)~\cite{baydin2018automatic}. 
MSE$_{PH}$ is used to regularize $\hat u\left(\boldsymbol{x},t;\boldsymbol{\Theta}_{U}\right) $ to satisfy  \eqref{eq:forward1}, 
which is considered a physics-informed loss.
Readers are referred to \cite{raissi2019physics} for the details.

\vspace{-0.2cm}
\section{Methods}
\subsection{CPINN for Solving Nonhomogeneous PDEs}
The nonhomogeneous PDEs under study are of  the following generalized form
\begin{equation}\label{eq:PaperGeneral1}
u_t(\boldsymbol{x},t)+\mathcal{N}[u(\boldsymbol{x},t)]=g(\boldsymbol{x},t),\hspace{0.5mm}
\boldsymbol{x} \in \Omega ,\hspace{0.5mm} t \in[0, T],
\end{equation}
where $\boldsymbol{x}$, $t$, $\Omega$,  {$u$}, and $\mathcal{N}[\cdot ]$ are similar to \eqref{eq:forward1};
$g: \mathbb{R}^{d}\times \mathbb{R} \rightarrow \mathbb{R} $ is the source term active in $\Omega$ and cannot always be easily measured with hard sensors, owing to the practical operating environment limitations.

The residual function is defined for the nonhomogeneous case as the following
\begin{equation}\label{eq:PaperGeneral2}
f_{N}(\boldsymbol{x},t)=u_t(\boldsymbol{x},t)+\mathcal{N}[u(\boldsymbol{x},t)]-g(\boldsymbol{x},t).
\end{equation}
When $g(\boldsymbol{x},t)$ is measurable, $f_N (\boldsymbol{x},t)$ is obtained using  AD with respect to  \eqref{eq:PaperGeneral2}.
However, the unmeasurable $g(\boldsymbol{x},t)$ will lead to  the aforementioned regularization {\eqref{eq:20221111-qp1}} infeasible. 
To this end, CPINN is first proposed to approximate the solutions to PDEs with unmeasurable source terms in  \eqref{eq:PaperGeneral1}. 
The proposed CPINN  contains two neural networks:
1) $NetU$ is for approximating the solution satisfying \eqref{eq:PaperGeneral1};
2) $NetG$ is for regularizing the training of $NetU$.

The training dataset {$D$} is uniformly sampled from the industrial processes governed by \eqref{eq:PaperGeneral1} using available hard sensors. 
 $D$ is divided into $D = D_{B}\cup D_{I}$ with $D_{B}\cap D_{I}=\varnothing$,
where $D_{B}$ denotes the training dataset sampled from the  initial and boundary conditions and $D_{I}$ is the training dataset sampled from the interior of $\Omega$.
The collocation points $E=E_B\cup E_I$, where $\left(\boldsymbol{x},t\right)\in E_B$ and  $\left(\boldsymbol{x},t\right)\in E_I$ correspond to those of $\left(\boldsymbol{x},t,u\right)\in D_{B}$ and $\left(\boldsymbol{x},t,u\right)\in D_{I}$, respectively.
Then,
we use the following data-physics-hybrid loss function
\begin{equation} \label{eq:loss}
{\rm {MSE}}_N = {\rm {MSE}}_{DN}+ {\rm MSE}_{PN}
\end{equation}
to train CPINN. ${\rm {MSE}}_{DN}$ and ${\rm {MSE}}_{PN}$ in \eqref{eq:loss} are the data-driven loss and physics-informed loss for the nonhomogeneous PDEs, respectively.
Let 
$$\hat e_N\left(\boldsymbol{x},t;\boldsymbol{\Theta}_{U}\right)= \hat u\left(\boldsymbol{x},t;\boldsymbol{\Theta}_{U}\right) -{u\left(\boldsymbol{x},t\right)} $$
denotes a data approximation error, i.e., CPINN approximation error.
Here, $\hat u\left(\boldsymbol{x}, t;\boldsymbol{\Theta}_{U}\right)$ is the function of $NetU$, with $\boldsymbol{\Theta}_{U}$ being  a set of parameters.
${\rm MSE}_{DN}$ is as the following form
\begin{equation}\label{eq:MSE_DN}
{\rm MSE}_{DN}=
\hspace{-1mm} \displaystyle \frac{1}{{\rm card}\left( D\right)}\underset{	{\left(\boldsymbol{x}, t,u\right)\in D }}{\sum} \hat e_N\left(\boldsymbol{x},t;\boldsymbol{\Theta}_{U}\right)^{2}.
\end{equation} 
${\rm MSE}_{PN}
$ is as the following
\begin{equation}\label{eq:cost function MSP_EN}
\begin{aligned}
{\rm MSE}_{PN}
= \displaystyle \frac{1}{{\rm card}\left(E\right)}\underset{(\boldsymbol{x}, t)\in {E}}{\sum}\hat{f}_N\left(\boldsymbol{x},t;\boldsymbol{\Theta}_{U}\right)^{2},
\end{aligned}
\end{equation}
where $\hat f_N\left(\boldsymbol{x}, t;\boldsymbol{\Theta}_{U}\right)$ denotes a physics-informed approximation
error, an estimation to  the residual function ${f}_N(\boldsymbol{x},t)$ based on $\hat{u}\left(\boldsymbol{x},t;\boldsymbol{\Theta}_{U}\right) $.
$\hat f_N\left(\boldsymbol{x}, t;\boldsymbol{\Theta}_{U}\right)$ is obtained as the following
\begin{equation}\label{eq:20221111-qp-2}
\hat{f}_N\left(\boldsymbol{x}, t;\boldsymbol{\Theta}_{U}\right) =\hat f\left(\boldsymbol{x}, t;\boldsymbol{\Theta}_{U}\right)- g\left(\boldsymbol{x}, t\right),
\end{equation}
where 
$\hat f(\boldsymbol{x}, t;\boldsymbol{\Theta}_{U})$ has been defined by  \eqref{eq:f}.
${\rm MSE}_{PN}$ is used to
regularize $\hat{u}\left(\boldsymbol{x},t;\boldsymbol{\Theta}_{U}\right) $ to satisfy \eqref{eq:PaperGeneral1}.
Consequently, when CPINN is used to approximate the solutions to \eqref{eq:PaperGeneral1},  the following equation is considered:
\begin{equation}
\begin{aligned}
\label{eq:appriximation error}
&{\hat u}_t\left(\boldsymbol{x},t;\boldsymbol{\Theta}_{U}\right)+\mathcal{N}\left[{\hat u}\left(\boldsymbol{x},t;\boldsymbol{\Theta}_{U}\right)\right]=\\
&\hspace{5mm}{\hat g}\left(\boldsymbol{x},t;\boldsymbol{\Theta}_{G}\right)+{\hat g}_d\left(\boldsymbol{x},t;\boldsymbol{\Theta}_{G}\right)+{\hat f}_N\left(\boldsymbol{x},t;\boldsymbol{\Theta}_{U}\right).
\end{aligned}
\end{equation}
Here, $\hat g\left(\boldsymbol{x}, t;\boldsymbol{\Theta}_{G}\right)$ is the function of $NetG$,  with $\boldsymbol{\Theta}_{G}$ being a set of parameters;
${\hat g}_d\left(\boldsymbol{x},t;\boldsymbol{\Theta}_{G}\right) = g\left(\boldsymbol{x}, t\right) -\hat g\left(\boldsymbol{x}, t;\boldsymbol{\Theta}_{G}\right)$
 is the approximation error of $NetG$.
In fact, the approximation $\hat{u}$ obtained by CPINN with respect to exact solution $u$ satisfies PDE \eqref{eq:appriximation error} obtained by perturbing \eqref{eq:PaperGeneral1} with  ${\hat e}_N$, ${\hat f}_N$,  and ${\hat g}_{d}$.
\subsection{Approximation Theorem for CPINN}
In this section, the approximation capability of CPINN for the solutions to PDEs following with  unmeasurable source terms  is theoretically proved based on the second power of  $L^2$-norm.
The definition of  well-posed PDE is first given as the following:
\newtheorem{Definition}{\bf Definition}
\begin{Definition}[Well-posed PDE]\label{defi1}
PDE in \eqref {eq:PaperGeneral1} is called well-posed, if the following two conditions are satisfied: 1) There exists a unique solution for all functions $g\left(\boldsymbol{x},t\right)$   in \eqref{eq:PaperGeneral1};  2) For each function $g_1$ and $g_{2}$ satisfying \eqref{eq:PaperGeneral1}, the corresponding solutions $u_1$ and $u_2$ satisfy
\begin{equation}\label{defiLp}
\left\|u_1-u_2\right\| \leq c\left\|g_1-g_2\right\|
\end{equation}
for a fixed and finite constant $0<c \in \mathbb{R}$.  $c$ is referred to as the Lipschitz constant of PDEs.
Here, $\|\cdot\|$ denotes the $L^1$-norm. 
\end{Definition}

To discuss the general approximation capability of CPINN, instead of the mean squared error in \eqref{eq:MSE_DN} and \eqref{eq:cost function MSP_EN}, the second power of $L^2$-norm  is used  as the following:
\begin{equation}\label{eq:lossbound}
\begin{aligned}
L_{DN}=\int_{0}^{T}\int_{\bar\Omega} {\hat e}_N\left(\boldsymbol{x},t;\boldsymbol{\Theta}_{U}\right)^2
 {\rm d} \boldsymbol{x}{\rm d } t.
 \end{aligned}
 \end{equation}
and
 \begin{equation}\label{eq:lossin}
L_{PN}=\int_{0}^{T}\int_{\bar\Omega} {\hat f}_N\left(\boldsymbol{x},t;\boldsymbol{\Theta}_{U}\right)^2
 {\rm d} \boldsymbol{x}{\rm d } t.
\end{equation}
Here, let $\bar\Omega=\Omega\cup\partial\Omega$ denote the closure of $\Omega$;
$\hat{e}_N$ and $\hat{f}_N$  are assumed to have finite $L^2$-norm. 
Consequently, the loss function for   training CPINN is given as the following:
\begin{equation}
\label{eq:lossU}
L_{\hat U}= \left[T{\rm card}\left(\bar\Omega\right)\right]^{-1}\left(L_{DN}+L_{PN}\right).
\end{equation}The following theorem guarantees the approximation capability of CPINN for an exact solution satisfying \eqref {eq:PaperGeneral1}.
\newtheorem{theorem}{\bf Theorem}
\begin{theorem}\label{thm1}
Assume PDE  in \eqref{eq:PaperGeneral1} to be well-posed. 
Then, for all $\varepsilon>0$, there exists a $\delta>0$,
\begin{equation}\notag
\label{eq:loss theorem}
L_{\hat U}<\delta \Longrightarrow\left\|\hat{u}-u\right\|<\varepsilon .
\end{equation}	
\end{theorem}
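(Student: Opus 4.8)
The plan is to reinterpret the trained output $\hat u$ not as an approximate solution of \eqref{eq:PaperGeneral1} but as the \emph{exact} solution of a nearby well-posed PDE, and then let the well-posedness hypothesis transfer smallness of the loss into smallness of the solution error. Concretely, I would begin from the perturbed equation \eqref{eq:appriximation error}. Since by definition $\hat g_d = g - \hat g$, the two $NetG$ terms collapse, $\hat g + \hat g_d = g$, so that $\hat u$ satisfies $\hat u_t + \mathcal{N}[\hat u] = g + \hat f_N$. In other words, $\hat u$ is the solution of \eqref{eq:PaperGeneral1} driven by the source $g + \hat f_N$, while $u$ is driven by $g$; the crucial feature is that the unmeasurable term $g$ cancels and only the residual $\hat f_N$, which is controlled by the physics loss, survives as the effective source perturbation.

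With this identification the argument reduces to three estimates. First, apply the Lipschitz stability \eqref{defiLp} of the well-posed PDE to the source pair $g + \hat f_N$ and $g$, which gives $\|\hat u - u\| \le c\,\|\hat f_N\|$ in the $L^1$-norm of Definition \ref{defi1}. Second, pass from this $L^1$-norm to the $L^2$-based loss by Cauchy--Schwarz over the bounded space--time cylinder $[0,T]\times\bar\Omega$:
\[
\|\hat f_N\|_{L^1} \le \bigl(T\,{\rm card}(\bar\Omega)\bigr)^{1/2}\Bigl(\textstyle\int_0^T\!\int_{\bar\Omega}\hat f_N^2\,{\rm d}\boldsymbol{x}\,{\rm d}t\Bigr)^{1/2} = \bigl(T\,{\rm card}(\bar\Omega)\bigr)^{1/2} L_{PN}^{1/2}.
\]
Third, since $L_{DN}\ge 0$, bound $L_{PN} \le L_{DN}+L_{PN} = T\,{\rm card}(\bar\Omega)\,L_{\hat U}$ by the definition \eqref{eq:lossU}. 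Chaining the three yields $\|\hat u - u\| \le c\,T\,{\rm card}(\bar\Omega)\,L_{\hat U}^{1/2}$, after which the theorem follows by setting $\delta = \bigl(\varepsilon/(c\,T\,{\rm card}(\bar\Omega))\bigr)^2$, so that $L_{\hat U}<\delta$ forces $\|\hat u - u\|<\varepsilon$.

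The delicate step, and where I expect the real work to lie, is the first one: legitimately invoking \eqref{defiLp} for the pair $(u,\hat u)$. Definition \ref{defi1} bounds the gap between two solutions purely through the gap between their sources, implicitly under \emph{identical} initial and boundary data, whereas $\hat u$ need not meet the prescribed conditions exactly. I would therefore have to argue that this is admissible — either by folding the initial/boundary mismatch into the effective source perturbation, so that $\hat f_N$ integrated over the closed cylinder already accounts for it, or by strengthening Definition \ref{defi1} to a joint Lipschitz dependence on source and boundary data and then controlling the boundary contribution through the data term $L_{DN}$ (which explains why both losses appear in $L_{\hat U}$). A secondary point to verify is reading ${\rm card}(\bar\Omega)$ as the finite Lebesgue measure of $\bar\Omega$, since this is precisely the constant that makes the Cauchy--Schwarz conversion and the normalization in \eqref{eq:lossU} consistent.
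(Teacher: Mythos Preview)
Your proposal is essentially identical to the paper's proof: the same cancellation $\hat g+\hat g_d=g$, the same application of Definition~\ref{defi1} to the source pair $(g+\hat f_N,\,g)$, the same H\"older/Cauchy--Schwarz step, and the same bound $L_{PN}\le L_{DN}+L_{PN}$; your choice $\delta=\bigl(\varepsilon/(c\,T\,{\rm card}(\bar\Omega))\bigr)^2$ is in fact the algebraically correct one, whereas the paper writes $\delta=\varepsilon^2/c^2$ after a small slip in the last equality of its H\"older chain. The initial/boundary-data subtlety you flag is simply not addressed in the paper's argument, which invokes \eqref{defiLp} directly without further justification.
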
 
\begin{proof}
According to \eqref{eq:appriximation error} and Definition \ref{defi1},
let $g_1={\hat g}\left(\boldsymbol{x},t;\boldsymbol{\Theta}_{G}\right)+{\hat g}_d\left(\boldsymbol{x},t;\boldsymbol{\Theta}_{G}\right)+{\hat f}_N\left(\boldsymbol{x},t;\boldsymbol{\Theta}_{U}\right)$, $g_2=g\left(\boldsymbol{x},t\right)$,
 there exists a finite Lipschitz constant $0<c \in \mathbb{R}$ satisfying
\begin{equation}\notag
\begin{aligned}
&\left\|\hat{u}\left(\boldsymbol{x},t;\boldsymbol{\Theta}_{U}\right)-u\left(\boldsymbol{x},t\right)\right\|\\
&\hspace{2mm}\leq c\left\|{\left({\hat g}\left(\boldsymbol{x},t;\boldsymbol{\Theta}_{G}\right)+{\hat g}_d\left(\boldsymbol{x},t;\boldsymbol{\Theta}_{G}\right)+{\hat f}_N\left(\boldsymbol{x},t;\boldsymbol{\Theta}_{U}\right)\right)}-{g}\left(\boldsymbol{x},t\right)\right\|\\
&\hspace{2mm}=c\left\|{\left(g\left(\boldsymbol{x},t\right)+{\hat f_N}\left(\boldsymbol{x},t;\boldsymbol{\Theta}_{U}\right)\right)}-{g}\left(\boldsymbol{x},t\right)\right\|\\
&\hspace{2mm}=c\left\|{\hat f}_N\left(\boldsymbol{x},t;\boldsymbol{\Theta}_{U}\right)\right\| .
\end{aligned}
\end{equation}
i.e.,
\begin{equation}
\label{eq:bound}
\left\|\hat e_N\right\| \leq c\left\|\hat f_N\right\|,
\end{equation}
which indicates that the  bound of CPINN approximation error   depends on physics-informed approximation
error.
According to the H{\"o}lder inequality, the  bound 
of $\left\|\hat{f}_N\right\|$  is given as follows:
\begin{equation}\notag
\begin{aligned}
\left\|{\hat f}_N\right\|&\leq\left[\int_{0}^{T} \int_{\bar\Omega}{\hat f}_N\left(\boldsymbol{x},t;\boldsymbol{\Theta}_{U}\right)^2 {\rm d} \boldsymbol{x}{\rm d } t\right]^{\frac{1}{2}}\left[\int_{0}^{T} \int_{\bar\Omega}1^2 {\rm d} \boldsymbol{x}{\rm d } t\right]^{\frac{1}{2}}\\
&=\left[T{\rm card}\left(\bar\Omega\right)\right]^{\frac{1}{2}}\left[\int_{0}^{T}\int_{\bar\Omega}{\hat f}_N\left(\boldsymbol{x},t;\boldsymbol{\Theta}_{U}\right)^2 {\rm d} \boldsymbol{x}{\rm d } t\right]^{\frac{1}{2}}\\
&\leq\left[T{\rm card}\left(\bar\Omega\right)\right]^{\frac{1}{2}}\left(L_{DN}+ L_{PN} \right)^{\frac{1}{2}}\\
&= L_{\hat U}^{\frac{1}{2}}.
\end{aligned}
\end{equation}
Consequently,
\begin{equation}\notag
\begin{aligned}
\left\|\hat{u}-u\right\| & \leq c\left\|{\hat f}_N \right\| \leq cL_{\hat U}^{\frac{1}{2}}.
\end{aligned}
\end{equation}
Finally, let
\begin{equation}
 \label{eq:deltaAndepsilon}
\delta=\frac{\varepsilon^2}{c^2},
\end{equation}
 for which $L_{\hat U}<\delta$ yields
$$
\|\hat{u}-u\| \leq c \delta^{\frac{1}{2}}=\varepsilon.
$$
\end{proof}{ Note that} $\varepsilon$ is  a monotonically increasing function about $\delta$ in \eqref{eq:deltaAndepsilon}, $\varepsilon\rightarrow 0$ means  $\delta\rightarrow 0$  and $\hat u \rightarrow u$.
Based on Theorem \ref{thm1}, the following {corollary} can be further obtained.
\newtheorem{corollary}{\bf Corollary}
\begin{corollary}\label{corollary}
Let ${\boldsymbol{\Theta}}_{U}\in \mathbb{R}^{p}$ and ${\boldsymbol{\Theta}}_{G}\in \mathbb{R}^{q}$ denote the parameters of $\hat{u} \left(\boldsymbol{x}, t;{\boldsymbol{\Theta}}_{U}\right)$ and $\hat{g} \left(\boldsymbol{x}, t;{\boldsymbol{\Theta}}_{G}\right)$, respectively. Assume that $\hat u:\Omega\times\left[0,T\right]\rightarrow\mathbb{R}$ and $\hat g:\Omega\times\left[0,T\right]\rightarrow\mathbb{R}$ are continuous functions with respect to ${\boldsymbol{\Theta}}_{U}$ and ${\boldsymbol{\Theta}}_{G}$, respectively. 
Then, for all $\delta >0$, there exists $\left\{{\boldsymbol{\Theta}}_{U},
{\boldsymbol{\Theta}}_{G}\right\}$\ $\in \mathbb{R}^{p}\times\mathbb{R}^{q}$ satisfying
$$L_{\hat U} \left({\boldsymbol{\Theta}}_{U},{\boldsymbol{\Theta}}_{G}\right)<\delta. $$
\end{corollary}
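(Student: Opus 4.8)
The plan is to build the required parameters from a universal approximation argument applied to $NetU$, exploiting the fact that the loss $L_{\hat U}=\left[T\,\mathrm{card}(\bar\Omega)\right]^{-1}(L_{DN}+L_{PN})$ is a normalized sum of two nonnegative integrals over the \emph{compact} set $\bar\Omega\times[0,T]$. Because the domain is compact and of finite volume, it suffices to drive the two integrands $\hat e_N^2$ and $\hat f_N^2$ uniformly to zero; uniform smallness of the integrands then immediately yields smallness of the integrals, and hence of $L_{\hat U}$.

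First I would fix the exact solution $u$ guaranteed by well-posedness (Definition \ref{defi1}) and rewrite the two error terms: $\hat e_N=\hat u-u$, and, using $u_t+\mathcal{N}[u]=g$ together with \eqref{eq:20221111-qp-2}, $\hat f_N=\left(\hat u_t-u_t\right)+\left(\mathcal{N}[\hat u]-\mathcal{N}[u]\right)$. The key step is then to invoke the universal approximation theorem in its simultaneous-derivative ($C^k$ / Sobolev) form: for a feedforward network with a smooth non-polynomial activation, the family $\{\hat u(\cdot;\boldsymbol{\Theta}_U):\boldsymbol{\Theta}_U\in\mathbb{R}^p\}$ is dense in the $C^k$-norm on compacta, where $k$ is at least the differential order of $\partial_t+\mathcal{N}$. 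Using the assumed continuity of $\hat u$ in $\boldsymbol{\Theta}_U$, this lets me select $\boldsymbol{\Theta}_U$ so that $\hat u$ and every partial derivative entering $\partial_t+\mathcal{N}$ are within an arbitrary $\eta>0$ of those of $u$, uniformly on $\bar\Omega\times[0,T]$.

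With such a $\boldsymbol{\Theta}_U$ in hand, I would bound the two pieces pointwise. The data term satisfies $|\hat e_N|\le\eta$, so $L_{DN}\le\eta^2\,T\,\mathrm{card}(\bar\Omega)$. For the physics term, the uniform derivative estimates give a pointwise bound on $\hat f_N$: if $\mathcal{N}$ is linear this is immediate, and if $\mathcal{N}$ is nonlinear I would use that it is locally Lipschitz in its derivative arguments over the compact range swept out by $u$ and its approximants, yielding $|\hat f_N|\le C\eta$ and hence $L_{PN}\le C^2\eta^2\,T\,\mathrm{card}(\bar\Omega)$. After dividing by the normalizing constant $T\,\mathrm{card}(\bar\Omega)$ the volume cancels, leaving $L_{\hat U}\le(1+C^2)\eta^2$; choosing $\eta$ with $(1+C^2)\eta^2<\delta$ then gives $L_{\hat U}<\delta$. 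Since these definitions make $L_{\hat U}$ a function of $\boldsymbol{\Theta}_U$ alone, $\boldsymbol{\Theta}_G$ may be taken arbitrarily, or, to respect the coupling, chosen by the same theorem so that $\hat g$ approximates $g$ and the auxiliary error $\hat g_d=g-\hat g$ is likewise small.

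I expect the main obstacle to be the control of $L_{PN}$ rather than $L_{DN}$. The elementary universal approximation theorem controls only function values, whereas $\hat f_N$ involves the time derivative and the spatial derivatives inside $\mathcal{N}$. Justifying the \emph{simultaneous} approximation of $u$ together with these derivatives is the crux, and it is precisely what forces the standing hypotheses, namely a sufficiently smooth non-polynomial activation, enough regularity of the exact solution $u$, and a local Lipschitz property of $\mathcal{N}$ when it is nonlinear, that make the pointwise estimate $|\hat f_N|\le C\eta$ legitimate.
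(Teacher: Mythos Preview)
Your argument is sound and, in fact, more complete than the paper's. You correctly observe that, by the definitions \eqref{eq:lossbound}--\eqref{eq:lossU} and \eqref{eq:20221111-qp-2}, the loss $L_{\hat U}$ depends on $\boldsymbol{\Theta}_U$ alone (the physics residual $\hat f_N$ contains the true $g$, not $\hat g$), and you control both $\hat e_N$ and $\hat f_N$ directly by invoking a $C^k$-type universal approximation result so that $\hat u$ and the derivatives entering $\partial_t+\mathcal{N}$ are simultaneously close to those of $u$ on the compact domain. This is exactly the ingredient needed to bound $L_{PN}$, and you are right to flag it as the crux.

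The paper proceeds differently: it first restates the implication of Theorem~\ref{thm1} for $\hat g$, then asserts (from continuity of $\hat u,\hat g$ in their parameters and the fact that $\mathrm{range}(\hat u)=\mathrm{range}(\hat g)=\mathbb{R}$) that one can pick $\boldsymbol{\Theta}_U,\boldsymbol{\Theta}_G$ with $\|\hat u-u\|<\varepsilon/2$ and $\|\hat g-g\|<\varepsilon/2$, and finally appeals to the continuity of $L_{\hat U}$ in the parameters to conclude. Compared with your route, the paper's argument is terser and does not explicitly isolate the derivative-approximation issue: closeness of $\hat u$ to $u$ in $L^1$ by itself does not force $L_{PN}$ to be small, so the step from the two $\varepsilon/2$ bounds to $L_{\hat U}<\delta$ is left implicit. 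Your approach supplies precisely that missing link via the Sobolev/$C^k$ density statement, at the cost of requiring the additional standing hypotheses you list (smooth non-polynomial activation, sufficient regularity of $u$, local Lipschitzness of $\mathcal{N}$). Both arguments share the same hidden caveat: with $p$ and $q$ fixed, density in $C^k$ is not guaranteed by a fixed-width architecture, so the claim really presupposes that the network class is rich enough.
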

\begin{proof}
The exact source term to \eqref{eq:PaperGeneral1} is given by $g$.
 Theorem \ref{thm1} also implies that there exists a $\delta>0$ for all $\varepsilon>0$, such that
\begin{equation}\label{eq:lossfunction230210}
L_{\hat {U}}<\delta \Longrightarrow\left\|\hat{g}-g\right\|<\varepsilon.
\end{equation}Note that {$\hat u\left(\boldsymbol{x},t;\boldsymbol{\Theta}_{U}\right)$ and $\hat g\left(\boldsymbol{x},t;\boldsymbol{\Theta}_{G}\right)$} are continuous functions  with respect to ${\boldsymbol{\Theta}}_{U}$ and ${\boldsymbol{\Theta}}_{G}$, respectively.
Furthermore, range$(\hat u)=\mathbb{R}$ and range$(\hat g)=\mathbb{R}$. 
Then, for all $\frac{\varepsilon}{2}>0$, there exists $\boldsymbol{\Theta}_{U}$ and $\boldsymbol{\Theta}_{G}$,  it holds that
\begin{equation}
\begin{aligned}
\left\|\hat{u}-u\right\|<\frac{\varepsilon}{2},\quad \left\|\hat{g}-g \right\|<\frac{\varepsilon}{2}
\end{aligned}.
\end{equation}
According to the second power of $L^2$-norm loss function in \eqref{eq:lossU}, $ L_{\hat U}$ is a continuous function with respect to ${\boldsymbol{\Theta}}_{U}$ and ${\boldsymbol{\Theta}}_{G}$. Consequently,  the corollary holds.
\end{proof}
\newtheorem{remark}{\bf Remark}
\begin{remark}
Because $\mathbb{R}^{p\times q}$ is compact, Theorem \ref{thm1} and Corollary \ref{corollary} ensure that there exists a series of parameters $\left\{{\boldsymbol{\Theta}}_{U}^{(j)},{\boldsymbol{\Theta}}_{G}^{(j)}\right\}$ to realize $L_{\hat U} \left({\boldsymbol{\Theta}}_{U}^{(j)},{\boldsymbol{\Theta}}_{G}^{(j)}\right)\rightarrow 0\ (j\rightarrow\infty)$. 
That is, if CPINN is well-trained to achieve {$\left\{{\boldsymbol{\Theta}}_{U}^{(j)},{\boldsymbol{\Theta}}_{G}^{(j)}\right\}$} using iterative {optimization methods}, CPINN can approximate the exact solutions to the well-posed nonhomogeneous PDEs  well.
\vspace{-2mm}
\end{remark}
\subsection{Hierarchical Training Strategy for CPINN}
Owning to the practical operating environment limitations, the exact functional forms  or even sparse measurements of $g(\boldsymbol{x}, t)$ are unavailable. 
Considering the mutual dependence between $NetU$ and $NetG$  shown in \eqref{eq:bound}, a hierarchical training strategy is proposed for optimizing and couping $NetU$ and $NetG$ to iteratively estimate  ${\boldsymbol{\Theta}}_{U}$ and ${\boldsymbol{\Theta}}_{G}$.
Let $\boldsymbol{\hat\Theta}_{U}^{(k)}$ denote the estimated parameter of $NetU$ at $kth$ iteration step and
{$\boldsymbol{\hat\Theta}^{(k+1)}_{G}$ denote the estimated parameter of  $NetG$ at $(k+1)th$ iteration step}.
The core issue of the hierarchical training strategy is to solve the following two coupled optimization problems
\begin{equation}\label{eq:NetGloss}
\begin{aligned}    
\displaystyle
\boldsymbol{\hat\Theta}_{G}^{(k+1)}&=
\underset{{\boldsymbol{\Theta}}_{G}}{\arg \min }\hspace{1mm}
\left\{{{\rm MSE}_{DN}\left(\boldsymbol{\hat\Theta}^{(k)}_{U} \right)}+
{{\rm MSE}_{PN}\left({\boldsymbol{\Theta}}_{G} ; \boldsymbol{\hat\Theta}_{U}^{(k)}\right)}
\right\} \\[1mm]
&=\underset{{\boldsymbol{\Theta}}_{G}}{\arg \min }\hspace{2mm} {{\rm MSE}_{PN}\left({\boldsymbol{\Theta}}_{G} ; \boldsymbol{\hat\Theta}_{U}^{(k)}\right)}
\end{aligned}\vspace{-2mm}
\end{equation}
and
\begin{equation}\label{eq:NetUloss}
\hspace{-4mm}\boldsymbol{\hat \Theta}_{U}^{(k+1)}\hspace{-0.5mm}=\hspace{-0.5mm}
\underset{{\boldsymbol{\Theta}}_{U}}{\arg \min }
\left\{
{{\rm MSE}_{DN}\left({\boldsymbol{\Theta}}_{U} \right)}\hspace{-0.5mm}+\hspace{-0.5mm}
{{\rm MSE}_{PN}\left({\boldsymbol{\Theta}}_{U} ; \boldsymbol{\hat\Theta}^{(k+1)}_{G}\right)}
\right\}.
\end{equation}The details of the hierarchical training strategy are described in Algorithm~\ref{alg:algorithm-label}. The architecture of CPINN is shown in Fig. \ref{fig:CPINN}, in which the iterative transmissions of ${\boldsymbol{\Theta}}_{U}$ and ${\boldsymbol{\Theta}}_{G}$ happen.
\begin{algorithm}
    \caption{{The hierarchical training strategy of optimizing and coupling for CPINN.}}
    \label{alg:algorithm-label}
    \begin{algorithmic}
   \STATE  \textbf{Initialization} ($k=0$) \\
   -The training dataset $(\boldsymbol{x}, t, u)\in D$  and collocation points $(\boldsymbol{x}, t)\in E$ are obtained. 
   
-Randomly generate parameters ${\boldsymbol{\Theta}}^{(0)}_{U}$ and ${\boldsymbol{\Theta}}^{(0)}_{G}$  for initializing $NetU$ and $NetG$, respectively.
  \WHILE {Stop criterion is not satisfied}
    \STATE -Training for $NetG$ by solving the optimization problem~\eqref{eq:NetGloss} to obtain {$\boldsymbol{\hat\Theta}_{G}^{(k+1)}$},
    where the  estimation of $\hat u_{t} \left(\boldsymbol{x},t; \boldsymbol{\hat \Theta}_{U}^{(k)}\right)+
    \mathcal{N}\left(\hat u(\boldsymbol{x}, t;\boldsymbol{\hat \Theta}^{(k)}_{U}\right)$ in ${\rm MSE}_{PN}$ is obtained from the former iteration result
    $\boldsymbol{\hat\Theta}^{(k)}_{U}$.
    \STATE -Training for $NetU$ by solving the optimization problem \eqref{eq:NetUloss} to obtain
    $\boldsymbol{\hat\Theta}^{(k+1)}_{U}$.
      \STATE -$k=k+1$;
         \ENDWHILE
    \STATE  \textbf{Return} the output $u(\boldsymbol{x},t;\boldsymbol{\hat\Theta}_{U-\mathrm{CPINN}})$ of CPINN.
    \end{algorithmic}
\end{algorithm} 
\begin{figure}[ht]
\begin{center}
\includegraphics[width=7cm]{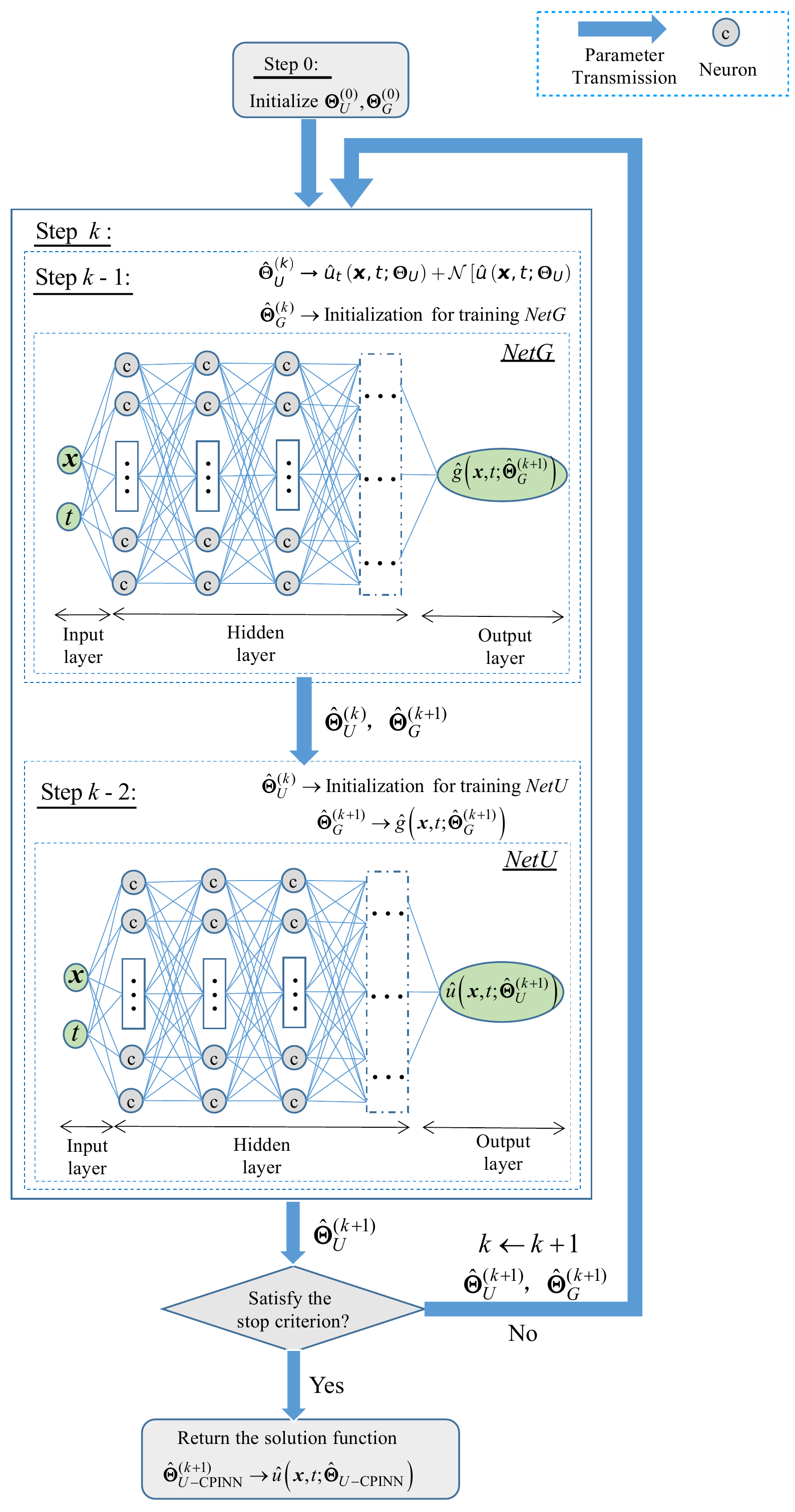}
\caption{Architecture of CPINN.}
\label{fig:CPINN}
\end{center}
\end{figure}
\subsection{CPINN-RP for Soft Sensors}

If PDEs include the partial derivative of temporal variables, the temporal dependence in the measurements can be  used to  improve the prediction performance.
Note that the mean squared error loss function \eqref{eq:loss} can be considered a Monte-Carlo approximation of the second power of  $L^2$-norm loss function \eqref{eq:lossU}.
The training datasets are sampled with a fixed temporal interval. 
This discretization strategy can cause information loss.
For this reason, CPINN is followed by RP to obtain CPINN-RP.  
In the first phase, CPINN composed of $NetU$ and $NetG$ is proposed to approximate PDEs solutions, and the hierarchical training strategy {\bf Algorithm} \ref{alg:algorithm-label}  optimizes and couples the two networks  to achieve parameter $\boldsymbol{\hat{\Theta}}_{U-\mathrm{C}\mathrm{PINN}}$.
In the second phase,  $NetU$ compensated with RP  obtaining $NetU$-$RP$ to improve the prediction performance,
which is initialized by $\boldsymbol{\hat\Theta}_{U}^{(k)}$. The spatial and temporal variables of  CPINN inputs are still fed in $NetU$-$RP$.
The rest inputs  of $NetU$-$RP$ are fed in an either-or way: 1) If a hard sensor is available at a collocation point, its delayed measurements  are used; 2) Otherwise, the delayed outputs of CPINN are used.
The training strategy for $NetU$-$RP$ is shown in Fig. \ref{fig:CPINN-RP}.
\begin{figure}[ht]
\begin{center}
\includegraphics[width=7.5cm]{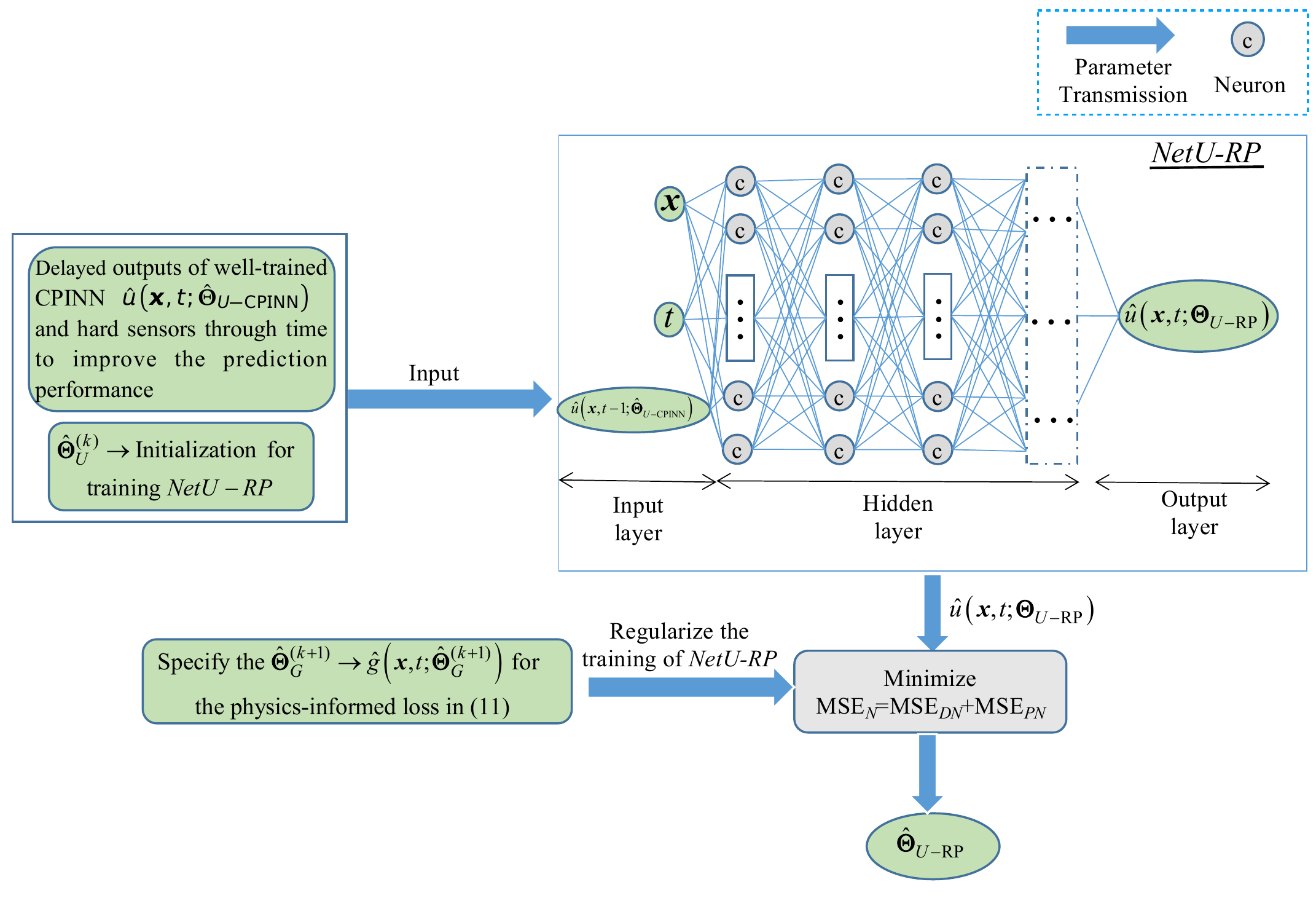} 
\caption{Training strategy for $NetU$-$RP$.}
\label{fig:CPINN-RP}
\end{center}
\end{figure}
The same loss function for training CPINN is also used to train $NetU$-$RP$ to achieve the final CPINN-RP output $u\left(\boldsymbol{x},t;\boldsymbol{\hat{\Theta}}_{U-\mathrm{RP}}\right)$ as soft sensing results.
\section{Numerical Experiments} 
In this section,
the performance of CPINN-RP is verified with numerical experiments implemented with Pytorch.
The fully-connected neural networks with hyperbolic tangent activation functions are used (ensuring  Theorem \ref{thm1} and Corollary \ref{corollary} hold) and  initialized by Xavier\cite{fang2021high}.
L-BFGS~\cite{zhu1997algorithm} is used  to  train  CPINN-RP.

We evaluate the prediction performance of  CPINN-RP  by means of root mean squared error (RMSE)
\begin{equation}\notag
\rm {RMSE}=\sqrt{\frac{1}{{\rm card}\left(T_e\right)}\sum_{(\boldsymbol{x},t,u)\in T_e} \left({\hat u\left(\boldsymbol{x},t\right)} -
 u\left(\boldsymbol{x}, t\right)\right)^{2}},
\end{equation}
where  $T_e$ is the testing dataset.
$\hat u\left(\boldsymbol{x}, t\right)$ and $ u\left(\boldsymbol{x}, t\right)$ denote the prediction and the corresponding ground truth, respectively.
To further validate the prediction performance of CPINN-RP, the Pearson correlation coefficient (CC)
\begin{equation}\notag
 {\rm CC} = \frac{{\rm Cov}\left(\hat u\left(\boldsymbol{x},t\right), u\left(\boldsymbol{x},t\right)\right)}{\sqrt{{\rm Var}   \left(\hat u\left(\boldsymbol{x},t\right)\right)}\sqrt{{\rm Var} \left(u\left(\boldsymbol{x}, t \right)\right)}}
\end{equation} 
is also used to measure the similarity between prediction and ground truth,
where ${\rm Cov}\left(\cdot,\cdot\right)$ is covariance and ${\rm Var}  (\cdot) $ is variance.
\vspace{-3mm}
\subsection{Case 1:  Heat Equation with Unmeasurable  Heat Source Term}
The  heat equation was first developed for  modeling  how heat diffuses through a given region\cite{bubanja2022mathematical, 8709973}.
 CPINN-RP is first verified with the following heat equation 
\begin{equation}\label{eq:HeatEquationNB}
\left\{
\begin{array}{ll}
\displaystyle
\frac{\partial u}{\partial t}=a^2 \frac{\partial^2 u}{\partial x^2}+g(x, t), \hspace{4mm}0<x<L, t>0 \\ [3mm]
\displaystyle
\left.u\right|_{t=0}=\phi(x), \quad \hspace{14mm} 0\leqslant x \leqslant L \\ [2mm]
\displaystyle
\left.u\right|_{x=0}=0,\left.\quad \frac{\partial u}{\partial x}\right|_{x=L}=0 ,\hspace{3mm}t>0
\end{array},
\right.
\end{equation}
where the thermal diffusivity coefficient $a=1$, the length of the finite line $L=\pi$,
the initial temperature $\phi(x)=\sin \left({x}/{2}\right)$,
and the  heat source  $ \displaystyle g(x)= \sin\left({x}/{2}\right)$. 
The exact solution $u(x,t)$  with respect to \eqref{eq:HeatEquationNB} is obtained according to \cite{kusse2010mathematical}.
The abovementioned setups are used to generate the training and testing datasets. Considering the practical situations, the  heat source is assumed to be unmeasurable.

In this case, $NetU$ consists of 3 hidden layers of 30 neurons individually; $NetG$ is of 8 hidden layers of 20 neurons individually; 
$NetU$-$RP$ is of the same hidden layer structure as $NetU$.
A total of 130 training data is randomly sampled from $ D_{B}$.
Moreover, the 20 sparse collocation points are randomly sampled to regularize the structure of~\eqref{eq:HeatEquationNB}.
The  abovementioned training dataset and the magnitude of  predictions $\hat u(x,t)$  are shown in Fig.~\ref{fig:241}(a).
Moreover, we compare the predictions and ground truths at fixed-time $t=3$ and 7 in Fig.~\ref{fig:241}(b) and (c), respectively.
Table~\ref{tb:241} shows the evaluation criteria for the prediction performance of CPINN-RP.
\begin{figure}[ht]
\begin{center}
\includegraphics[width=6cm]{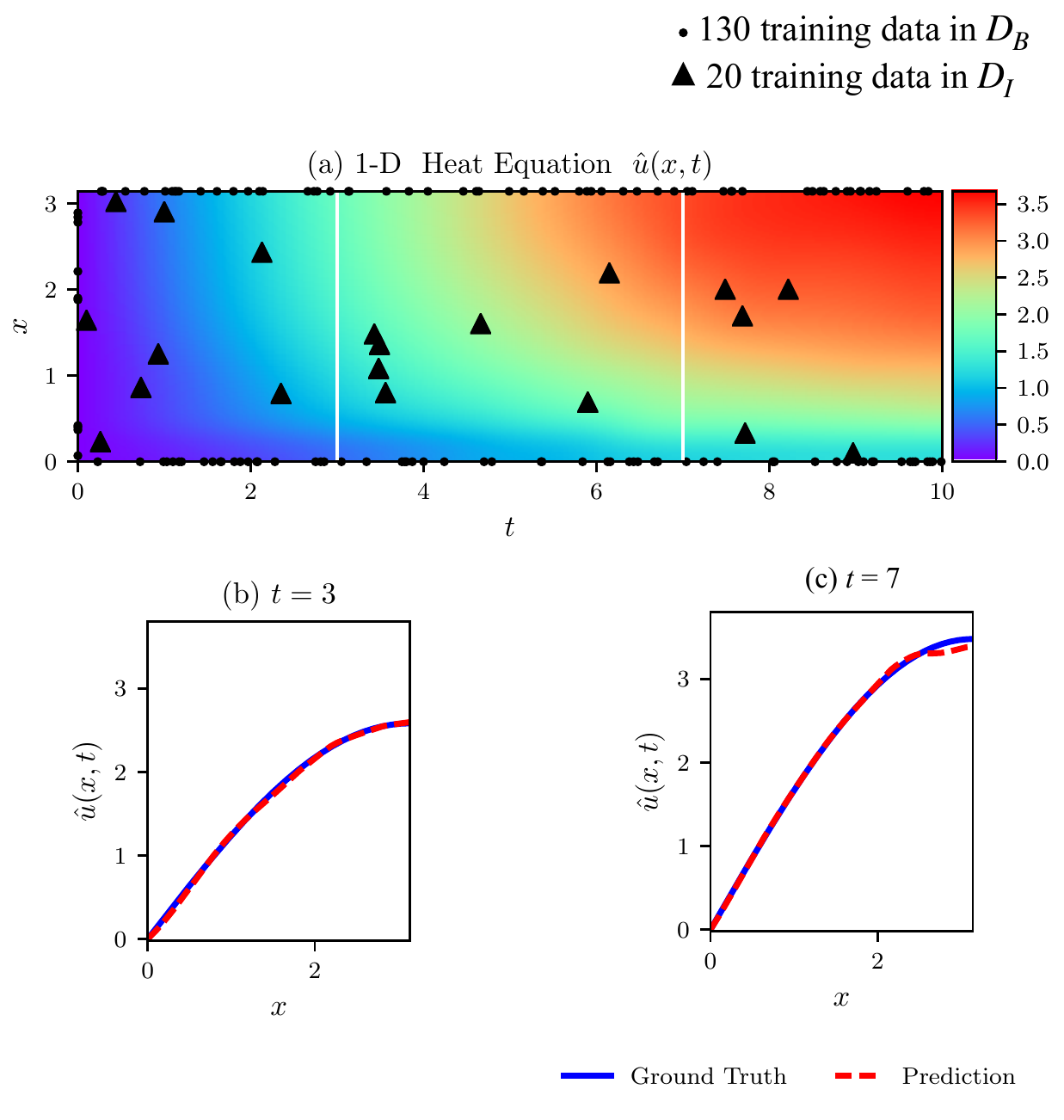}   
\caption{(a) Predictions $\hat u\left(x,t\right)$ for  1-D heat equation.
(b) and (c) Comparisons of  predictions and ground truths corresponding  to fixed-time $t$=3 and 7 snapshots depicted by  dashed vertical lines in (a), respectively.}
\label{fig:241}
\end{center}
\end{figure}
\begin{table}[H]
\begin{center}
\caption{Evaluation criteria for temporal snapshots are depicted by  dashed vertical lines in Fig.~\ref{fig:241}-(a).}
\label{tb:241}
\setlength{\tabcolsep}{1.5mm}{
\begin{tabular}{ccccccc}
\hline
\tabincell{c}{Criteria} &
\tabincell{c}{3} &
\tabincell{c}{7} &
\tabincell{c}{$\left[0,\pi\right]\times\left[0,10\right]$}
\\\hline
{\tabincell{c}{${\rm RMSE}$}}&   1.633733e-02 & 3.920911e-02& 4.234160e-02  \\
{\tabincell{c}{CC}} & 9.999875e-01 & 9.999571e-01& 9.999368e-01  \\\hline
\end{tabular}}
\end{center}
\end{table}
\subsection{Case 2: Wave Equation with Unmeasurable Driving Force Term}
The wave equation is a second-order linear PDE describing various fluctuation phenomena \cite{9062594,chu2021acoustic}.
In this section, we further verify CPINN-RP with the following wave equation
\begin{equation}\label{eq:WaveEquation}
\left\{
\begin{array}{ll}
\displaystyle
\frac{\partial^2 u}{\partial t^2}=a^2 \frac{\partial^2 u}{\partial x^2}+g(x, t) ,\hspace{4mm}0<x<L, t>0 \\[3mm]
\displaystyle
\left.u\right|_{t=0}=0,\left.\quad \frac{\partial u}{\partial t}\right|_{t=0}=0  ,\hspace{4mm}0 \leqslant x \leqslant L \\[3mm]
\displaystyle
\left.u\right|_{x=0}=0,\left.\quad u\right|_{x=L}=0  ,\hspace{6mm}t>0 
\end{array},
\right.
\end{equation}
where the velocity $a=1$,
the length of finite line $L=\pi$,
the time of  $t=6$,
and the force 
$g(x, t)=\sin \frac{2 \pi x}{L} \sin \frac{2 a \pi t}{L}$.
The acquisition of training and testing datasets is similar to Case 1.

In this experiment, $NetU$  consists of 3 hidden layers of 30 neurons individually; $NetG$ consists of 8 hidden layers of 20 units individually; $NetU$-$RP$ is of the same hidden layer structure as $NetU$.
A total of 210 training data in $D$,
170 training data in $D_{B}$  and 40 training data in $D_{I}$, are randomly sampled.
Fig.~\ref{fig:1DWaveEquation229}(a) shows the training dataset and the magnitude of predictions $\hat u(x,t)$.
Fig.~\ref{fig:1DWaveEquation229}(b) and (c) show the comparisons of the predictions and ground truths corresponding to the fixed-time $t$ = 2 and 4.
The prediction performance of CPINN-RP is further quantified in Table~\ref{tb:229}.
\begin{figure}[ht]
\begin{center}
\includegraphics[width=6.5cm]{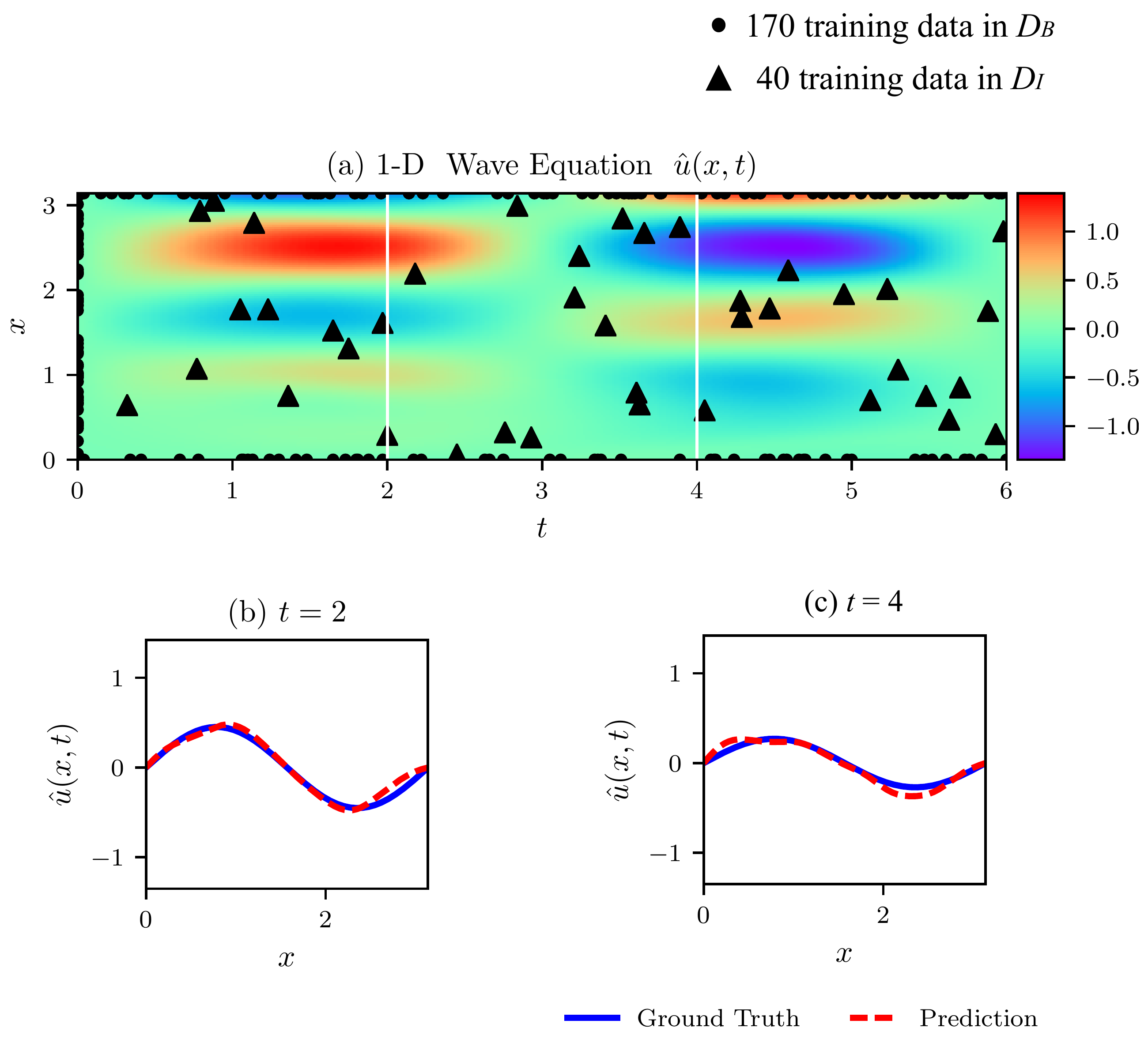}    
\caption{(a) Predictions $\hat u\left(x,t\right)$ for 1-D wave equation.
(b) and (c) Comparisons of predictions and ground truths corresponding to fixed-time $t$=2 and 4 snapshots depicted by  dashed vertical lines in (a), respectively.}
\label{fig:1DWaveEquation229}
\end{center}
\end{figure}
\begin{table}[ht]
\begin{center}
\caption{{Evaluation criteria for temporal snapshots {are} depicted by  dashed vertical lines in Fig.~\ref{fig:1DWaveEquation229}-(a).}}
\label{tb:229}
\setlength{\tabcolsep}{1.5mm}{
\begin{tabular}{ccccccc}
\hline
\tabincell{c}{Criteria} &
\tabincell{c}{2} &
\tabincell{c}{4} &\tabincell{c}{$\left[0,\pi\right]\times\left[0,6\right]$}\\\hline
{\tabincell{c}{${\rm RMSE}$}} & 4.844044e-02 & 5.280401e-02& 6.748852e-02 \\
 {\tabincell{c}{CC}} &9.898351e-01 &9.819893e-01& 9.876968e-01 \\\hline
\end{tabular}}
\end{center}
\end{table}
\section{Experimental Verification with  Data from a Practical Vibration Process} 
In this section,  
the practical datasets sampled from aero-engine involute spline couplings fretting wear experiment platform are used to demonstrate the feasibility and effectiveness of CPINN-RP for soft sensors.
The experiment system is presented in Fig. \ref{fig:platform} and Fig. \ref{fig:system}. 
\begin{figure}[ht]
\begin{center}
\includegraphics[width=7.5cm]{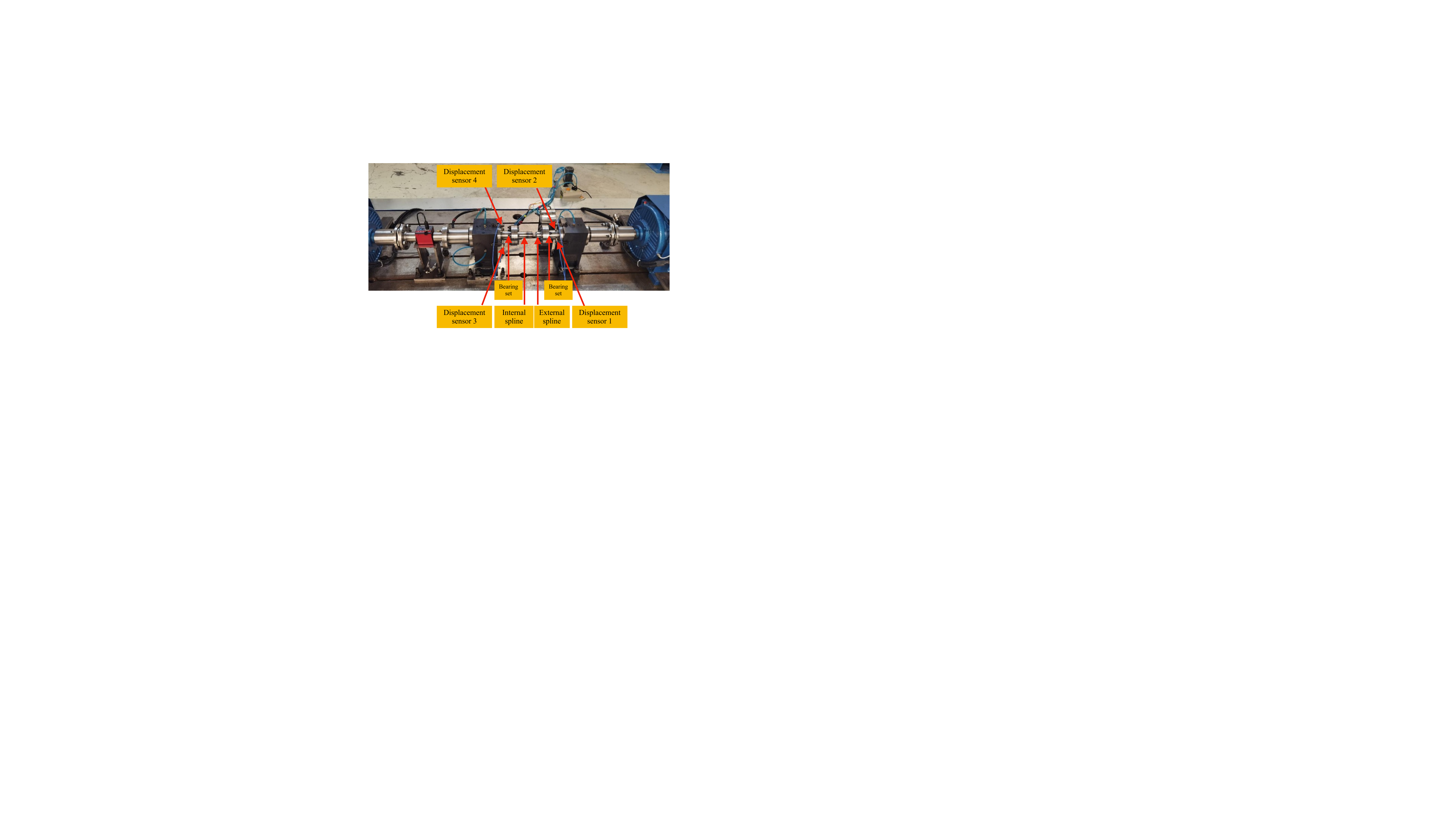}   
\caption{Aero-engine involute spline couplings fretting wear experiment platform.}
\label{fig:platform}
\end{center}
\end{figure}
\begin{figure}[ht]
\begin{center}
\includegraphics[width=7.5cm]{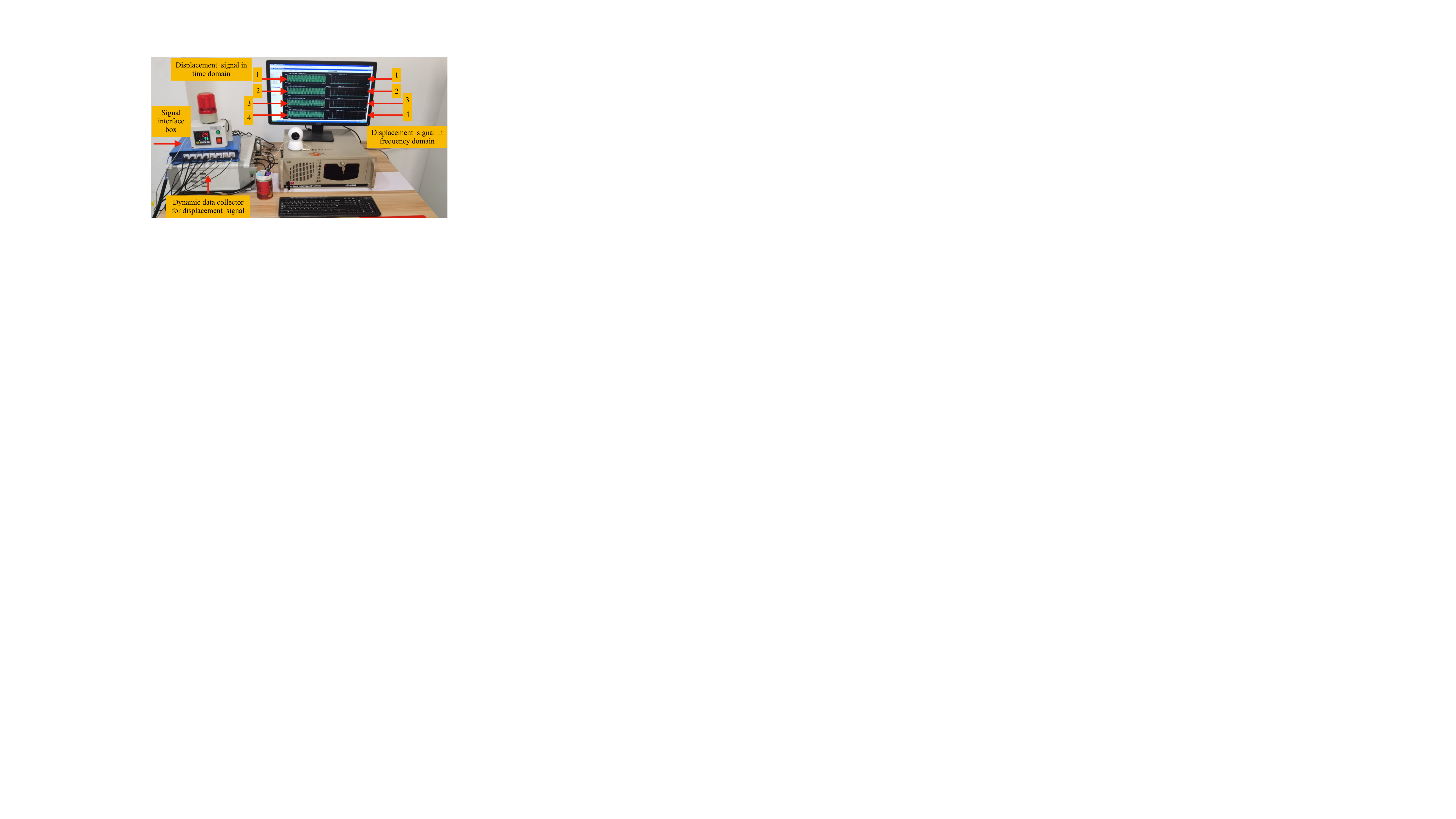}   
\caption{Data acquisition and processing system.}
\label{fig:system}
\end{center}
\end{figure}As shown in Fig. \ref{fig:platform}, the experiment platform is mainly composed of spline couplings, bearing sets,  displacement sensors, and a motor drive.
The data acquisition and processing system in Fig. \ref{fig:system} is used to monitor the vibration displacement of the internal and external spline shafts.
The datasets are sampled with the condition as follows.
\begin{itemize}
\item[1)] The working speed of the motor drive is 3000 r/min.
\item[2)] The sampling frequency is 2048Hz with 4096 sampling points.
\end{itemize}
The time-domain waveforms for the sampled signals are shown in Fig.~\ref{fig:signalTimeFrequencydomain230601}. 
\begin{figure}[ht]
\begin{center}
\includegraphics[width=7.5cm]{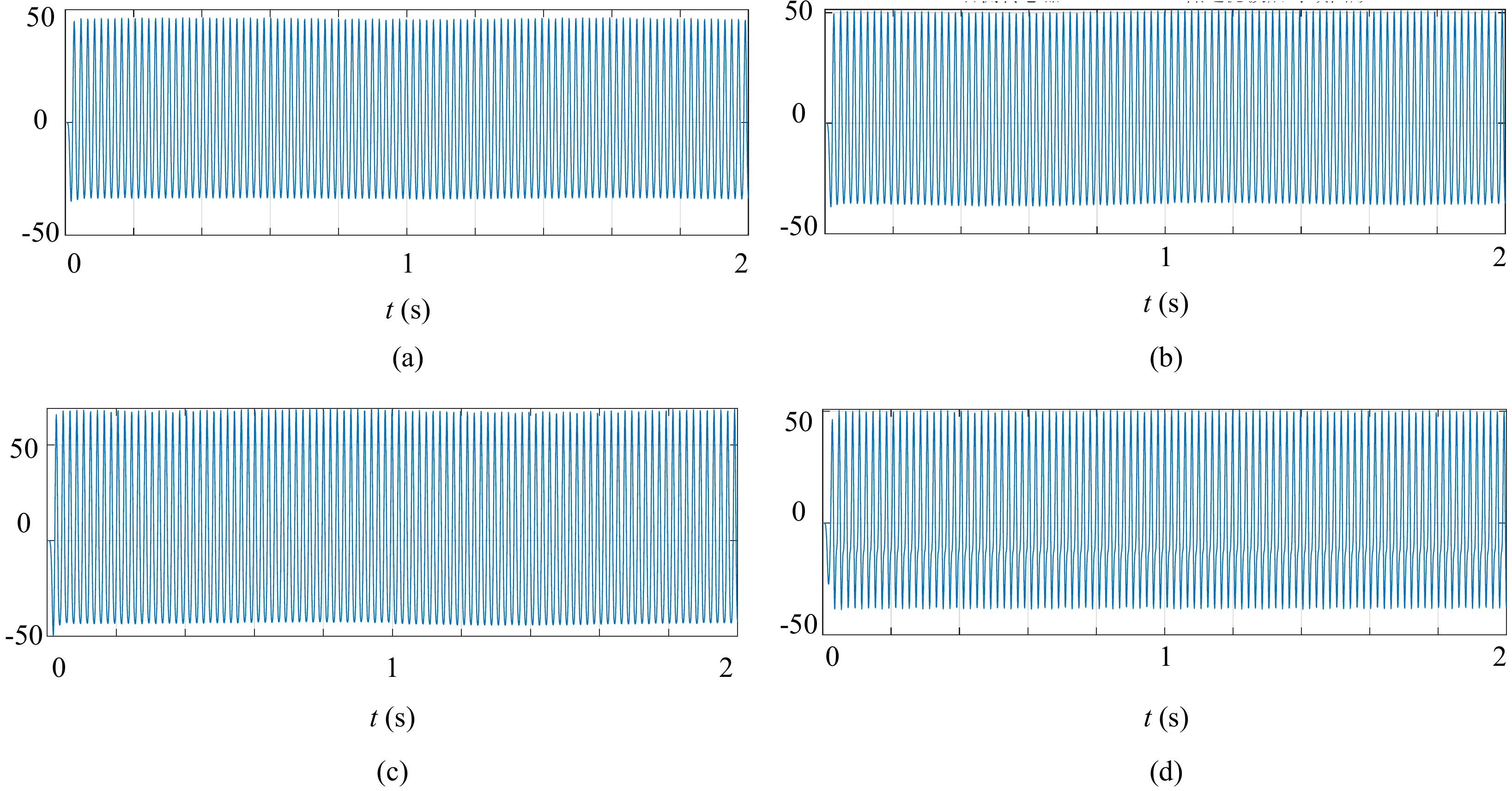}    
\caption{Raw displacement signals from sensors in time domain: (a) Sensor 1.
(b) Sensor 2.
(c) Sensor 3.
(d) Sensor 4.
}
\label{fig:signalTimeFrequencydomain230601}
\end{center}
\end{figure}
 As described in Section \Rmnum{4}, the vibration displacement of the spline shaft is  governed by \eqref{eq:WaveEquation}.
 The distance $L$ between the shaft of two ends is $520$mm
and measurements $u(x_i,t)$ for $i=1,2,3,4$ are
obtained from displacement sensors 1-4 as described in Fig. \ref{fig:platform} and \ref{fig:system}. 
The total number of training data is about 1$\%$ of the total sampled data.
The measurements obtained from  sensor 4 are assumed to be unavailable and CPINN-RP is used as a soft sensor for predicting the output of sensor 4.
 The setups of $NetU$, $NetG$, and $NetU$-$RP$
 can be referred to Section \Rmnum{4}.
The prediction performance of CPINN-RP is  depicted in Table~\ref{tb:platform}. 
\begin{table}[ht]
\begin{center}
\caption{Evaluation criteria for predictions in aero-engine involute spline couplings fretting wear experiment platform}
\label{tb:platform}
\setlength{\tabcolsep}{0.7mm}{
\begin{tabular}{cccccc}
\hline\hspace{-1mm}{Sensor} &
\tabincell{c}{Criteria} &
\tabincell{c}{Training} &
\tabincell{c}{Testing} &\tabincell{c}{$\left[0,5.2\right]\times\left[0,2\right]$}\\\hline
\hspace{-2mm}\multirow{2}{*}{1} &
{RMSE} &2.613778e-02 & 5.776048e+00& 5.772521e+00\\
&{\tabincell{c}{CC}} &9.999967e-01 & 9.879768e-01& 9.879771e-01 \\\hline

\hspace{-2mm}\multirow{2}{*}{2} &{RMSE} &1.088167e-02 & 5.449913e+00& 5.446586e+00\\
&{\tabincell{c}{CC}} &9.999997e-01 & 9.909315e-01& 9.909329e-01 \\\hline

\hspace{-2mm}\multirow{2}{*}{3} &
{RMSE} &9.480886e-03 & 4.762897e+00& 4.759989e+00\\
&{\tabincell{c}{CC}} &9.999998e-01 & 9.933763e-01& 9.933794e-01 \\\hline
\hspace{-2mm}\multirow{2}{*}{4} 
&{RMSE} &2.144921e+00 & 8.222315e+00& 8.217437e+00\\ 
&{\tabincell{c}{CC}} &9.997592e-01 & 9.714094e-01& 9.714027e-01 
 \\\hline\end{tabular}}
\end{center}
\end{table}
Because the  measurements of sensor 4 are unavailable, the soft sensing results for  sensor 4 are not as good as the  other three sensors,  as shown in Table~\ref{tb:platform}. 
Although it does not outperform the hard sensors, CPINN-RP still offers feasible  spatiotemporal predictions for difficult-to-measure variables and can be expected to as a core model for soft sensors.

\section{Conclusion}
This article proposed CPINN-RP for soft sensors.
First, CPINN composed of  $NetU$ and $NetG$  is proposed to approximate the solutions to  nonhomogeneous PDEs.   
The proposed CPINN is theoretically proved to have a satisfying approximation capability for the solutions to the well-posed PDEs  with unmeasurable source terms.
Besides the theoretical aspects, $NetU$ and $NetG$ are optimized and coupled  by the proposed hierarchical training strategy.
Then, 
$NetU$-$RP$ is obtained with the recurrently delayed outputs of well-trained CPINN   and  hard sensors to further improve  the prediction performance.
Finally, the feasibility and effectiveness of CPINN-RP for soft sensors are validated with the artificial and practical datasets. 
Our method can be expected to benefit the  soft sensors for  industrial processes with spatiotemporal dependence and unmeasurable  sources.

In the future,
we will continue to use  CPINN-RP as a soft sensor for more industrial scenarios other than the vibration processes.
Meanwhile, more complex situations,
such as  PDEs with exactly unknown structures,  will be considered in CPINN-RP.
Feature extraction layers, such as convolution and pooling, will be added to CPINN-RP for architecture extensions.
\bibliographystyle{IEEEtran}
\bibliography{myref}
\if false
\vspace{-1cm}
\begin{IEEEbiography}[{\includegraphics[width=1in,height=1.25in,clip,keepaspectratio]{Author1}}]{Aina Wang}
received the B.S. degree in Electrical Engineering and Automation, in 2016, from Shenyang University of Technology, Liaoning, China. 
She is currently working toward the Ph.D. degree in electronics and information  with the School of Control Science and Engineering,
Dalian University of Technology, Liaoning, China. Her
research interests include optimization machine learning and optimization.
\end{IEEEbiography}
\vspace{-1cm}
\begin{IEEEbiography}[{\includegraphics[width=1in,height=1.25in,clip,keepaspectratio]{Author2}}]{Pan Qin}
received the B.S. and the M.S. in Electrical Engineering and Automation from Northwest Polytechnic University, Shaanxi, China, in 2000 and 2004, respectively. Then he received the Ph.D. in Information Science from Kyushu University, Fukuoka, Japan, in 2008. He worked as a research fellow for nearly six years in the School of Mathematics and Science and the Institute of Math-for-industry at Kyushu University.    
He is an associate professor at Dalian University of Technology, Dalian, China.    
His current research interests focus on statistics, machine learning, and optimization.\end{IEEEbiography}
\vspace{-1cm}
\begin{IEEEbiography}[{\includegraphics[width=1in,height=1.25in,clip,keepaspectratio]{Author_Xi-Ming Sun}}]{Xi-Ming Sun}
 (M'07-SM'13) received the Ph.D. degree in Control Theory and Control Engineering from the Northeastern University, China, in 2006. From August 2006 to December 2008, he worked as a Research Fellow in the Faculty of Advanced Technology, University of Glamorgan, UK. He then visited the School of Electrical $\&$ Electronic Engineering, Melbourne University, Australia in 2009, and Polytechnic Institute of New York University in 2011, respectively. He is currently a Professor in the School of Control Science and Engineering, Dalian University of Technology, China. He is IEEE Senior Member, member of IFAC and the associate editor of the journal of IEEE Transactions on Cybernetics. He was awarded the Most Cited Article 2006–2010 from the journal of Automatica in 2011. His research interests include switched delay systems, networked control systems, and nonlinear systems.
\end{IEEEbiography}
 \fi

\end{document}